\documentclass[11pt]{article}

\usepackage{acl}

\usepackage{times}
\usepackage{latexsym}

\usepackage[T1]{fontenc}

\usepackage[utf8]{inputenc}

\usepackage{microtype}

\usepackage{inconsolata}

\usepackage{graphicx}

\usepackage[ruled, lined]{algorithm2e}

\usepackage{times}
\usepackage{latexsym}
\usepackage{amsmath}      
\usepackage{amsthm}       
\usepackage{amssymb}      

\usepackage{booktabs}
\usepackage{multirow}

\newtheorem{theorem}{Theorem}[section]
\newtheorem{lemma}[theorem]{Lemma} 
\usepackage{comment}

\title{SAGE: Sign-Adaptive Gradient for Memory-Efficient LLM Optimization}

\author{
  \textbf{Wooin Lee}\textsuperscript{1}\thanks{Corresponding author},
  \textbf{Hyuntae Kim}\textsuperscript{2}
\\\\
  \textsuperscript{1}London, United Kingdom,
  \textsuperscript{2}Seoul, Republic of Korea
\\
  \small{\{\textsuperscript{1}\texttt{naubull2}, \textsuperscript{2}\texttt{kimhntae}\}@gmail.com}
}

\begin{document}
\maketitle
\begin{abstract}
The AdamW optimizer, while standard for LLM pretraining, is a critical memory bottleneck, consuming optimizer states equivalent to twice the model's size. Although light-state optimizers like SinkGD attempt to address this issue, we identify the \textit{embedding layer dilemma}: these methods fail to handle the sparse, high-variance gradients inherent to embeddings, forcing a hybrid design that reverts to AdamW and partially negates the memory gains. We propose \textbf{SAGE} (Sign Adaptive GradiEnt), a novel optimizer that resolves this dilemma by replacing AdamW in this hybrid structure. \textbf{SAGE} combines a Lion-style update direction with a new, memory-efficient $O(d)$ adaptive scale. This scale acts as a "safe damper," provably bounded by $1.0$, which tames high-variance dimensions more effectively than existing methods. This superior stability allows \textbf{SAGE} to achieve better convergence. On Llama models up to 1.3B parameters, our \textbf{SAGE}-based hybrid achieves new state-of-the-art perplexity, outperforming all baselines, including SinkGD hybrid, while significantly reducing optimizer state memory.
\footnote{Implementation is available at: \url{https://github.com/naubull2/SAGE-optimizer}}
\end{abstract}

\section{Introduction}
\label{sec:intro}
The training of Large Language Models (LLMs) \citep{gpt3, llama} is a resource-intensive endeavor, fundamentally constrained by the "memory wall." While significant strides have been made to reduce the memory footprint of model parameters—through techniques such as Low-Rank Adaptation (LoRA) \citep{hu2022lora} or low-precision quantization \citep{dettmers2022llm}—these approaches often act as compromises, trading off model capacity or training dynamics for memory efficiency. However, a distinct and equally critical memory bottleneck persists in the optimizer itself. The industry-standard AdamW optimizer \citep{loshcilov19}, prized for its stability, demands two full-sized moment states for every parameter. For modern multi-billion parameter models, these $O(Vd)$ states can consume memory proportional to twice the model size, drastically limiting feasible batch sizes and model scaling.

To alleviate this, a lineage of light-state optimization research has emerged. Sign-based methods like Lion \citep{lion} demonstrated the viability of using a single moment state with sign-based updates. Methods like GaLore \citep{ga_lore} and APOLLO \citep{apollo} explored low-rank projections and block-wise updates to reduce state overhead. Most recently, SWAN \citep{swan} and SinkGD \citep{scetbon25} proposed a stateless $O(1)$ normalization technique, representing the latest advancement in minimizing memory overhead while maintaining training stability.

Building upon these advances, we observed an intriguing phenomenon regarding the embedding layer. While recent light-state methods are highly effective for dense layers, we found that the sparse, high-variance gradients characteristic of the embedding layer often pose a unique stability challenge. This observation explains the design choice in recent works to retain a hybrid structure, falling back to the memory-intensive AdamW specifically for embeddings. Our work takes this observation as a starting point: rather than viewing this as a limitation of prior work, we see it as an opportunity to develop a specialized, memory-efficient solution that eliminates this final dependency on heavy optimizers (Section \ref{sec:motivation}).
\\\\
\noindent\textbf{Contributions.} Our core contribution is to solve this \textit{embedding layer dilemma}. We introduce \textbf{SAGE} (\textbf{S}ign \textbf{A}daptive \textbf{G}radi\textbf{E}nt), a novel, memory-efficient optimizer specifically designed to master the embedding layer's gradients. \textbf{SAGE} maintains only a single $O(Vd)$ moment state (like Lion) but replaces AdamW's $O(Vd)$ second-moment state with a novel, $O(d)$ dimension-wise \textbf{adaptive damper}. This state tracks the mean absolute gradient ($L_1$ norm) and is theoretically bounded, providing a memory-efficient mechanism to stabilize high-variance gradients.
This design results in a truly memory-efficient hybrid optimizer. We demonstrate that our \textbf{SAGE}-based hybrid, which allows for a more aggressive learning rate, outperforms AdamW, Lion, and the original SinkGD+Adam hybrid in both test perplexity and memory footprint across models up to 1.3B parameters (Section \ref{sec:experiments}).

\section{Background and Motivation}
\label{sec:motivation}

\subsection{The Optimizer State Memory Bottleneck}
The memory cost of an optimizer is dominated by its states. For a parameter $\theta$, AdamW maintains a first moment $\mathbf{M}_t$ and a second moment $\mathbf{V}_t$. For an embedding layer of size $V \times d$, this requires two $O(Vd)$ states. As $V$ often exceeds $100,000$, these states are the single largest memory cost outside of the model parameters themselves.

\subsection{The Promise of Light-State Optimizers}
Our work is built upon the foundations laid by several key paradigms in optimization. We view the following methods as the cornerstones of our approach, synthesizing their strengths to achieve a new level of efficiency:
\begin{itemize}
    \item \textbf{AdamW} \citep{loshcilov19}: The robust standard, providing the stability of second-moment normalization which we aim to approximate efficiently.
    \item \textbf{Lion} \citep{lion}: The pioneer of the single-state, sign-based update, which demonstrates that memory can be halved without sacrificing convergence speed.
    \item \textbf{SinkGD} \citep{scetbon25}: The state-of-the-art framework for light-state training, which provides the effective hybrid architecture that we adopt and refine.
\end{itemize}

\subsection{The Embedding Layer Dilemma}
\label{ssec:dilemma}
While promising, we observe a curious escape hatch in the design of state-of-the-art light-state optimizers. The work by \citet{scetbon25} itself defaults to a hybrid structure, applying its stateless SinkGD to 2D weights but reverting to the 2-state AdamW for the embedding layer.

This raises a critical question: why does the light-state approach fail for embeddings? We hypothesize this is due to the unique properties of the embedding layer: its gradients are both \textbf{sparse} and \textbf{high-variance} due to the Zipfian distribution \citep{zipf1946psychology} of token frequencies. Stateless methods, which apply uniform normalization, struggle to efficiently learn representations under these conditions.

To validate this hypothesis, we conduct a motivating experiment comparing two versions of SinkGD on a 270M parameter model:
\begin{enumerate}
    \item \textbf{SinkGD-Pure:} The stateless normalizer is applied to \textbf{all} 2D parameters, including embeddings.
    \item \textbf{SinkGD-Hybrid:} The hybrid design from \citet{scetbon25}, using AdamW for the embedding layer.
\end{enumerate}

\begin{figure}[h]
  \centering
  \includegraphics[width=1.0\columnwidth]{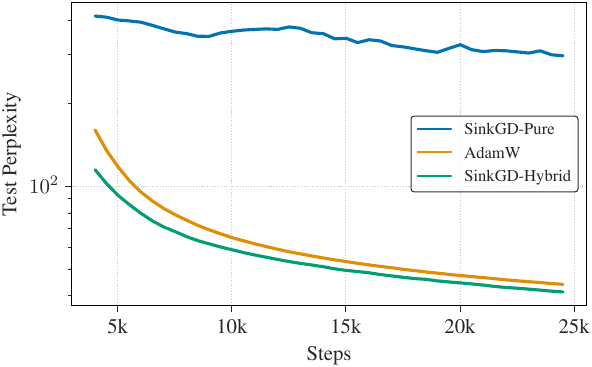}
  \caption{Test Perplexity for SinkGD-Pure vs. SinkGD-Hybrid. Even with tuned learning rates, the pure-stateless optimizer fails to learn effective embedding representations, resulting in significantly worse final perplexity compared to the hybrid design.}
  \label{fig:sinkgd_paradox}
\end{figure}

As shown in Figure \ref{fig:sinkgd_paradox}, while SinkGD-Pure can be tuned to remain stable, it suffers from a catastrophic deficit in learning efficiency, failing to reach competitive perplexity. This confirms our hypothesis: the embedding layer requires specialized, adaptive optimization that naive light-state methods cannot provide.

This result provides strong evidence for our central hypothesis: the embedding layer is a critical point of instability that naive light-state optimizers cannot handle. Therefore, the key to a truly memory-efficient optimizer is to design a new, light-state algorithm that can successfully replace AdamW on the embedding layer.

\section{Proposed Method: SAGE}
\label{sec:method}

\subsection{A Hybrid, Efficient Optimizer}
Our analysis in Section \ref{ssec:dilemma} establishes a clear design goal: an embedding optimizer that is both memory-light and stable against sparse, high-variance gradients.

The Lion optimizer \cite{lion} is a strong memory-light baseline, using only a single $O(Vd)$ momentum state $\mathbf{M}_t$. Its core update, $\hat{\mathbf{U}}_t = \text{sign}(\mathbf{M}_t)$, can be understood as having an implicit, static update scale of $1.0$ in every dimension. While simple and fast, this static scale lacks any mechanism to control high-variance dimensions, which limits its maximum stable learning rate and can be suboptimal.

We propose \textbf{SAGE} as a direct, adaptive generalization of Lion. SAGE replaces Lion's static $1.0$ scale with a novel, $O(d)$ adaptive scale, $\mathbf{H}_t$.
This adaptive scale is the core of our contribution. It is an element-wise relative damper derived from an $O(d)$ exponential moving average(EMA) of the mean absolute gradients ($L_1$ norm). As detailed in Algorithm \ref{alg:sage}, this scale compares the "loudness" of each dimension to the layer update's root mean square(RMS) average. It is provably bounded ($||\mathbf{H}_t||_\infty \le 1.0$), allowing it to selectively damp high-variance dimensions where the signal exceeds the layer average, while defaulting to Lion's $1.0$ scale for quieter dimensions.

Our full optimizer follows the hybrid structure motivated in Section \ref{sec:motivation}. While prior work \cite{scetbon25} relied on AdamW for 1D parameters, we find that SAGE is equally effective for them. We therefore propose a configuration summarized in Table \ref{tab:hybrid_structure}:

\quad
\begin{table}[h]
\centering
\small
\begin{tabular}{llc}
  \toprule
  \textbf{Model Component} & \textbf{Optimizer} & \textbf{State Size} \\
  \midrule
  Embedding Layer & \textbf{SAGE} & $O(Vd) + O(d)$ \\
  Biases/Norms (1D) & \textbf{SAGE} & $2 \times O(d)$ \\
  Dense Weights (2D) & SinkGD & $O(1)$ \\
  \bottomrule
\end{tabular}
\caption{Composition of the proposed \textbf{SAGE-Hybrid} optimizer. By using SAGE for embeddings instead of AdamW (which requires $2 \times O(Vd)$ states), we reduce the dominant memory cost by $\approx 50\%$ while maintaining stability.}
\label{tab:hybrid_structure}
\end{table}
\quad

\subsection{The SAGE Algorithm}
The SAGE update is detailed in Algorithm \ref{alg:sage}. While primarily designed for the embedding layer, SAGE generalizes naturally to 1D parameters (biases, layer norms) via simple branching logic (Equation \ref{eq:s_t_calc}), allowing it to act as a universal stateful optimizer.

\begin{algorithm}
  \DontPrintSemicolon 
  \caption{Sign Adaptive GradiEnt}
  \label{alg:sage}
  
  \textbf{Input:} Learning rate schedule $(\eta_t)_{t \ge 1}$, decay rates $w, \beta_1, \beta_2, \epsilon$.\;
  \textbf{Initialize:} Params $\theta_0$, $\mathbf{M}_0 \leftarrow \mathbf{0}$, $\mathbf{S}_0 \leftarrow \mathbf{0}$.\;
  \BlankLine
  
  \For{$t = 1$ \KwTo $T$}{
      $g_t \leftarrow \nabla_{\theta} f_t(\theta_{t-1})$\;
      $\theta_{t-1} \leftarrow \theta_{t-1} \cdot (1 - \eta_t w)$\;
      \BlankLine
      
      \eIf{$\theta$ is $2D$ (Embedding)}{
          $(\mathbf{s}_t)_j \leftarrow \frac{1}{V} \sum_{i=1}^V |g_{t,ij}|$ \text{for} $j=1...d$\;
      }{
          $\mathbf{s}_t \leftarrow |g_t|$ \hfill \{Element-wise Abs\}\;
      }
      \BlankLine
      
      $\mathbf{S}_t \leftarrow \beta_2 \mathbf{S}_{t-1} + (1 - \beta_2) \mathbf{s}_t$\;
      $\hat{\mathbf{S}}_t \leftarrow \mathbf{S}_t / (1 - \beta_2^t)$\;
      \BlankLine
      
      $\sigma_{\text{rms}} \leftarrow \sqrt{\frac{1}{d}\sum_{j=1}^d (\hat{\mathbf{S}}_t)_j^2}$\;
      \BlankLine
      
      $\gamma_{\text{rms}} \leftarrow \sqrt{\frac{1}{d}\sum_{j=1}^d (\mathbf{s}_t)_j^2}$\;
      \BlankLine
      
      \For{$j=1$ \KwTo $d$}{
          $(\mathbf{D}^{\text{ema}}_t)_j \leftarrow \sigma_{\text{rms}} / ((\hat{\mathbf{S}}_t)_j + \epsilon)$\;
          $(\mathbf{D}^{\text{inst}}_t)_j \leftarrow \gamma_{\text{rms}} / ((\mathbf{s}_t)_j + \epsilon)$\;
          $(\mathbf{H}_t)_j \leftarrow \min((\mathbf{D}^{\text{ema}}_t)_j, (\mathbf{D}^{\text{inst}}_t)_j, 1)$\;
      } 
      
      $\mathbf{C}_t \leftarrow \text{sign}(\beta_1 \mathbf{M}_{t-1} + (1 - \beta_1) g_t)$\;
      $\hat{\mathbf{U}}_t \leftarrow \mathbf{C}_t \odot \mathbf{H}_t$\;
      $\theta_t \leftarrow \theta_{t-1} - \eta_t \cdot \hat{\mathbf{U}}_t$\;
      $\mathbf{M}_t \leftarrow \beta_2 \mathbf{M}_{t-1} + (1 - \beta_2) g_t$\;
  }
  \Return $\theta_T$\;
\end{algorithm}

\subsubsection*{Decoupled Weight Decay}
We follow the AdamW \cite{loshcilov19} style of decoupled weight decay, which is applied directly to the parameters $\theta$ before any other operation:
\begin{equation}
    \theta_{t-1} \leftarrow \theta_{t-1} \cdot (1 - \eta_t w)
\end{equation}

\subsubsection*{Adaptive Scale Calculation}
The core of SAGE's memory-efficiency is its $O(d)$ adaptive state, $\mathbf{S}_t$.
For embedding layers (2D), we compute $\mathbf{s}_t$ by taking the mean of absolute gradients across the vocabulary dimension $i\in\{1,...,V\}$ for each embedding dimension $j\in\{1,...,d\}$:
\begin{equation}
\label{eq:s_t_calc}
   \mathbf{s}_t \leftarrow \begin{cases} \frac{1}{V} \sum_{i=1}^V |g_{t,ij}| & \text{if 2D (Embedding)} \\ |g_t| & \text{if 1D (Bias/Norm)} \end{cases}%
\end{equation}
For 1D parameters, we set $\mathbf{s}_t = |g_t|$, retaining per-element resolution.
This vector $\mathbf{s}_t$ is used to update the exponential moving average (EMA) $\mathbf{S}_t$. To counteract bias towards zero during the initial training steps, we apply a standard bias correction \cite{kingma15} to obtain the final state $\hat{\mathbf{S}}_t$

The adaptive scale $\mathbf{H}_t$ is calculated using a \textit{Relative RMS} strategy. We first calculate the Root Mean Square ($\sigma_{\text{rms}}$) of the state vector $\hat{\mathbf{S}}_t$ to establish a reference "loudness" for the layer's update. We then define the relative adaptive scale $\mathbf{H}_t$ as the ratio of this reference to the $j$-th dimension's value.

This design of $\mathbf{H}_t$, formalized in Algorithm\ref{alg:sage}, ensures scale invariance. For "quiet" dimensions (where $(\hat{\mathbf{S}}_t)_j < \sigma_{\text{rms}}$), the ratio exceeds $1$, and we clip to $1$, defaulting to Lion's behavior. For "loud" dimensions (where $(\hat{\mathbf{S}}_t)_j > \sigma_{\text{rms}}$), the ratio is $< 1$, safely damping the high-variance signal.

Finally, to prevent catastrophic instability from sudden spikes that the EMA may lag behind, we enforce an additional instantaneous stability constraint derived from the current batch statistics (analogous to element-wise Adaptive Gradient Clipping \cite{brock2021agc}); full details are provided in Appendix \ref{sec:appendix_algo_details}.

\subsubsection*{Direction and Final Update}
The final parameter update is the element-wise product of the direction $\mathbf{C}_t$ and our new adaptive scale $\mathbf{H}_t$, scaled by the learning rate $\eta_t$. The update direction $\mathbf{C}_t$ and momentum state $\mathbf{M}_t$ are updated identically to Lion \cite{lion}.
\begin{equation}
    \hat{\mathbf{U}}_t \leftarrow \mathbf{C}_t \odot \mathbf{H}_t, \quad \theta_t \leftarrow \theta_{t-1} - \eta_t \cdot \hat{\mathbf{U}}_t
\end{equation}

\subsection{Theoretical Analysis}
We now provide a theoretical analysis to ground our approach. The convergence of our hybrid optimizer relies on its components; we refer to the established proofs for AdamW \cite{loshcilov19} and Sinkhorn GD \cite{scetbon25}. Our analysis focuses on proving the convergence of our novel \textbf{SAGE} update.

Following standard practice in non-convex optimization \cite{nesterov04, robbins51sgd, bottou18}, we make the following assumptions:
\begin{itemize}
  \item \textbf{L-smoothness:} The loss function $f$ is $L$-smooth, ensuring the gradient does not change arbitrarily fast.
  \item \textbf{Bounded Variance:} The stochastic gradients $g_t$ have bounded variance,\\
           $\mathbb{E}[||g_t - \nabla f(\theta_t)||^2] \le \sigma^2$.
  \item \textbf{Bounded Gradients:} The stochastic gradients are bounded, $||g_t||_\infty \le G$.
\end{itemize}

\paragraph{Justification for the Adaptive State.}
Our use of the $L_1$-based state $\mathbf{S}_t$ as a proxy for per-dimension gradient scale is grounded in the established principles of deep network training. Modern architectures, through techniques like Xavier initialization \cite{glorot10} and Layer Normalization \cite{ba16layernorm}, are designed to maintain activation and gradient means near zero. Under this zero-mean assumption, the mean absolute value ($\mathbf{S}_t \approx \mathbb{E}[|g_t|]$) becomes a robust estimator of the gradient's standard deviation (scale), similar to the $\sqrt{\mathbb{E}[g_t^2]}$ estimator used by Adam.

\begin{lemma}[Bounded $\mathbf{S}_t$]
\label{lemma:bounded_s_t}
\textit{Given the Bounded Gradients assumption, the stochastic gradient $g_t$ is bounded. Therefore, the per-dimension L1-mean $\mathbf{s}_t$ (an average of bounded values) is also bounded. As $\mathbf{S}_t$ is an EMA of this bounded signal, $\mathbf{S}_t$ is also bounded, i.e., $|| \mathbf{S}_t ||_\infty \le S_{\text{max}}$.}
\end{lemma}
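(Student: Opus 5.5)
The plan is a direct induction on $t$ that shows one can take $S_{\text{max}} = G$. The three steps below follow the chain of reasoning in the statement of Lemma~\ref{lemma:bounded_s_t}.

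\textbf{Step 1: bound the signal $\mathbf{s}_t$.} I will show $\|\mathbf{s}_t\|_\infty \le G$ for every $t$, treating the two branches of Equation~\ref{eq:s_t_calc} separately.
\begin{itemize}
  \item In the 1D branch, $\mathbf{s}_t = |g_t|$. Each entry therefore satisfies $(\mathbf{s}_t)_j = |g_{t,j}| \le \|g_t\|_\infty \le G$ directly from the Bounded Gradients assumption.
  \item In the 2D (embedding) branch, $(\mathbf{s}_t)_j = \frac{1}{V}\sum_{i=1}^V |g_{t,ij}|$ is an average of $V$ numbers, each lying in $[0,G]$. An average of such numbers also lies in $[0,G]$.
\end{itemize}
Nonnegativity of $\mathbf{s}_t$ in both branches will be used in Step 2.

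\textbf{Step 2: induction on the EMA.} The claim is $0 \le (\mathbf{S}_t)_j \le (1-\beta_2^t)\,G$ for all $t$ and $j$, assuming $\beta_2 \in [0,1)$.
\begin{itemize}
  \item Base case: $\mathbf{S}_0 = \mathbf{0}$ satisfies the claim trivially.
  \item Inductive step: from the update $\mathbf{S}_t = \beta_2 \mathbf{S}_{t-1} + (1-\beta_2)\mathbf{s}_t$, both terms are nonnegative, and
  \[
  (\mathbf{S}_t)_j \le \beta_2(1-\beta_2^{t-1})G + (1-\beta_2)G = (1-\beta_2^t)G .
  \]
\end{itemize}
This gives $\|\mathbf{S}_t\|_\infty \le G$, so $S_{\text{max}} = G$ works. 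Equivalently, one can unroll the recursion to $\mathbf{S}_t = (1-\beta_2)\sum_{k=1}^t \beta_2^{t-k}\mathbf{s}_k$, a nonnegative combination of the $\mathbf{s}_k$ with weights summing to $1-\beta_2^t \le 1$, which yields the same bound.

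\textbf{Step 3: bias-corrected state.} The sharper form of the induction gives a corollary for free: $\|\hat{\mathbf{S}}_t\|_\infty = \|\mathbf{S}_t\|_\infty/(1-\beta_2^t) \le G$. This matters because the later analysis of $\mathbf{H}_t$ actually uses $\hat{\mathbf{S}}_t$, and a naive bound on $\mathbf{S}_t$ alone would lose a factor of $1/(1-\beta_2^t)$, which blows up at small $t$.

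\textbf{Main obstacle.} There is no real obstacle. The only care points are:
\begin{itemize}
  \item using nonnegativity, so that the bound holds in absolute value;
  \item carrying the factor $(1-\beta_2^t)$ through the induction rather than the cruder bound $G$, so that the bound passes to $\hat{\mathbf{S}}_t$;
  \item noting that the argument is elementwise, so it applies verbatim to both the $O(d)$ embedding state and the 1D-parameter state.
\end{itemize}
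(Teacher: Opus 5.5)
Your proof is correct and follows the paper's own chain of reasoning, which is given inside the lemma statement itself: bounded $g_t$ $\Rightarrow$ bounded average $\mathbf{s}_t$ $\Rightarrow$ bounded EMA $\mathbf{S}_t$. You make it quantitative with $S_{\max}=G$ and extend it to $\hat{\mathbf{S}}_t$, which the paper uses without comment in the proof of Lemma~\ref{lemma:bounded_h_t}. One small correction to your remark: the naive route does not actually blow up, since $1/(1-\beta_2^t)\le 1/(1-\beta_2)$ for all $t\ge 1$, so it would still give the looser uniform bound $\|\hat{\mathbf{S}}_t\|_\infty \le G/(1-\beta_2)$.
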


\begin{lemma}[Bounded Adaptive Scale $\mathbf{H}_t$]
\label{lemma:bounded_h_t}
\textit{The SAGE adaptive scale $\mathbf{H}_t$ is, by design, element-wise bounded, with an $L_\infty$ norm $||\mathbf{H}_t||_\infty \le 1.0$.}
\end{lemma}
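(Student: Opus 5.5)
The bound follows directly from the definition of $\mathbf{H}_t$ in Algorithm~\ref{alg:sage}. For each coordinate $j$ the algorithm sets $(\mathbf{H}_t)_j = \min\bigl((\mathbf{D}^{\text{ema}}_t)_j, (\mathbf{D}^{\text{inst}}_t)_j, 1\bigr)$. The minimum of a finite set of reals never exceeds any of its members, so $(\mathbf{H}_t)_j \le 1$ for every $j$ and every $t$. This upper bound needs none of the three assumptions and not even Lemma~\ref{lemma:bounded_s_t}.

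For the $L_\infty$ norm we also need a lower bound, namely $(\mathbf{H}_t)_j \ge 0$; then $|(\mathbf{H}_t)_j| = (\mathbf{H}_t)_j \le 1$. We check nonnegativity term by term.
\begin{itemize}
\item Each $(\mathbf{s}_t)_j$ is either an average of absolute values or an absolute value, so $\mathbf{s}_t \ge 0$ elementwise.
\item With $\mathbf{S}_0 = \mathbf{0}$ and $\beta_2 \in [0,1)$, induction on $t$ shows $\mathbf{S}_t$ is a convex combination of nonnegative vectors, hence $\mathbf{S}_t \ge 0$. The factor $1/(1-\beta_2^t)$ is positive, so $\hat{\mathbf{S}}_t \ge 0$ as well.
\item $\sigma_{\text{rms}}$ and $\gamma_{\text{rms}}$ are square roots, hence nonnegative.
\end{itemize}
The denominators $(\hat{\mathbf{S}}_t)_j + \epsilon$ and $(\mathbf{s}_t)_j + \epsilon$ are therefore at least $\epsilon > 0$. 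So both ratios $\mathbf{D}^{\text{ema}}_t$ and $\mathbf{D}^{\text{inst}}_t$ are well defined, finite and nonnegative. It follows that $0 \le (\mathbf{H}_t)_j \le 1$ for all $j$, and taking the maximum over $j$ gives $\|\mathbf{H}_t\|_\infty \le 1$.

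The only step needing care is the well-definedness just described: we must have $\epsilon > 0$ and $\beta_2 < 1$ so that no division by zero occurs and every quantity stays real and nonnegative. We state these as standing hypotheses on the hyperparameters. Two consequences used later are immediate: $\|\hat{\mathbf{U}}_t\|_\infty = \|\mathbf{C}_t \odot \mathbf{H}_t\|_\infty \le 1$, since $|\text{sign}(\cdot)| \le 1$, and so every SAGE step moves each coordinate by at most $\eta_t$.
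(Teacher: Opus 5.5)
Your proof is correct and follows the same route as the paper: the upper bound comes straight from the $\min(\cdot,1)$ clamp, and nonnegativity comes from nonnegative numerators divided by denominators that are at least $\epsilon>0$. You are slightly more careful than the paper, whose proof writes only the two-term $\min((\mathbf{D}_t)_j,1)$ and cites Lemma~\ref{lemma:bounded_s_t} for nonnegativity; you instead use the actual three-term minimum from Algorithm~\ref{alg:sage} and verify $\hat{\mathbf{S}}_t\ge 0$ directly.
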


\begin{proof}
Let $\hat{\mathbf{S}}_t$ be the bias-corrected EMA state. From Lemma \ref{lemma:bounded_s_t}, $\hat{\mathbf{S}}_t$ is bounded and non-negative.
We calculate the layer-wise RMS, $\sigma_{\text{rms}} = \sqrt{\frac{1}{d}\sum (\hat{\mathbf{S}}_t)_j^2}$, which serves as a strictly positive scalar reference (assuming non-zero gradients).
The raw damper $\mathbf{D}_t$ is calculated as $(\mathbf{D}_t)_j = \sigma_{\text{rms}} / ((\hat{\mathbf{S}}_t)_j + \epsilon)$.

The final adaptive scale $\mathbf{H}_t$ is computed element-wise as:
\begin{equation}
\label{eq:h_t_clamp}
    (\mathbf{H}_t)_j = \min((\mathbf{D}_t)_j, 1)
\end{equation}
By the definition of the $\min$ operation, for any dimension $j$, the value $(\mathbf{H}_t)_j$ is non-negative and guaranteed to be $(\mathbf{H}_t)_j \le 1.0$.
Therefore, the $L_\infty$ norm (the maximum absolute element) of the adaptive scale is also bounded:
\begin{equation}
\label{eq:h_norm_final}
    ||\mathbf{H}_t||_\infty = \max_j |(\mathbf{H}_t)_j| \le 1
\end{equation}
This formally proves that our adaptive scale $\mathbf{H}_t$ provides a provably safe, non-amplifying update.
\end{proof}

\begin{theorem}[Convergence of SAGE]
\label{theorem:sage_convergence}
\textit{SAGE converges to a stationary point (i.e., $\lim_{T \to \infty} \mathbb{E}[||\nabla f(\theta_t)||^2] = 0$) under the standard assumptions and a suitable learning rate decay schedule.}
\end{theorem}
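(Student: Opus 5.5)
The plan is to run a descent-lemma argument on $f$, in the style of sign-based methods (Lion, signSGD with momentum). Lemma~\ref{lemma:bounded_h_t} controls the size of each step. A matching lower bound on $\mathbf{H}_t$ controls the progress made by each step. Three concrete choices are needed.

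\begin{itemize}
\item \emph{Weight decay.} I take $w=0$, or equivalently absorb the decay into $f$ as an $L$-smooth $\ell_2$ regularizer.
\item \emph{Schedules.} I choose $\eta_t \propto t^{-a}$ with $a\in(4/5,1]$, so that $\sum_t \eta_t=\infty$ and $\sum_t\eta_t^2<\infty$. The momentum rate satisfies $1-\beta_{2,t}=\eta_t^{1/2}$, and $\beta_1$ is fixed.
\item \emph{Extra assumption.} Besides Lemma~\ref{lemma:bounded_h_t}, I need a lower bound $(\mathbf{H}_t)_j\ge h_{\min}>0$. It follows from Lemma~\ref{lemma:bounded_s_t} via $(\mathbf{D}^{\text{ema}}_t)_j\ge \sigma_{\text{rms}}/(S_{\max}+\epsilon)$, and similarly for $\mathbf{D}^{\text{inst}}_t$ using $\|g_t\|_\infty\le G$. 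This step also needs the layer RMS to be bounded below by some $\sigma_0>0$, i.e.\ the non-degenerate-gradient condition that the proof of Lemma~\ref{lemma:bounded_h_t} already tacitly uses. I will state this explicitly as part of the ``standard assumptions''.
\end{itemize}

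\textbf{Step 1 (one-step descent).} $L$-smoothness together with $\|\hat{\mathbf{U}}_t\|_2\le\sqrt{d}$ (from Lemma~\ref{lemma:bounded_h_t}) gives
\begin{equation*}
f(\theta_t)\le f(\theta_{t-1})-\eta_t\langle\nabla f(\theta_{t-1}),\mathbf{C}_t\odot\mathbf{H}_t\rangle+\tfrac{L d}{2}\eta_t^2 .
\end{equation*}
Write $\mathbf{c}_t=\beta_1\mathbf{M}_{t-1}+(1-\beta_1)g_t$ and $\nabla=\nabla f(\theta_{t-1})$. Split coordinates by whether $\operatorname{sign}(\mathbf{c}_{t,j})$ agrees with $\operatorname{sign}(\nabla_j)$. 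On mismatched coordinates $|\nabla_j|\le|\mathbf{c}_{t,j}-\nabla_j|$. Combined with $h_{\min}\le(\mathbf{H}_t)_j\le 1$, this yields
\begin{equation*}
\langle\nabla,\mathbf{C}_t\odot\mathbf{H}_t\rangle\ \ge\ h_{\min}\|\nabla\|_1-2\|\mathbf{c}_t-\nabla\|_1 .
\end{equation*}

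\textbf{Step 2 (momentum tracking error).} Decompose $\mathbf{c}_t-\nabla=\beta_1(\mathbf{M}_{t-1}-\nabla)+(1-\beta_1)(g_t-\nabla)$. Unroll the EMA $\mathbf{M}_t$. The drift term is bounded by $L\sqrt d\,\eta_{s}$ per step, since the iterates move by at most $\eta_s\sqrt d$. The noise term is a martingale-difference sum bounded in expectation through the bounded-variance assumption. This gives
\begin{equation*}
\mathbb{E}\|\mathbf{M}_{t-1}-\nabla\|_1\lesssim \sqrt{d}\bigl(\sigma\sqrt{1-\beta_{2,t}}+L\sqrt d\,\eta_t/(1-\beta_{2,t})\bigr)=O(\eta_t^{1/4}).
\end{equation*}

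\textbf{Step 3 (the main obstacle).} The fresh-noise term $(1-\beta_1)\|g_t-\nabla\|_1$ does not shrink. I expect handling it to be the hardest step, because a stochastic sign update with a non-vanishing noise floor cannot give an exact limit of zero. I would first try a $\beta_1$ schedule with $\beta_1\to1$ at the same rate as $\beta_2$, so that $\mathbf{c}_t$ is also a long average. If that proves too delicate, the fallback is a growing batch size, which drives the variance of $g_t$ to zero summably. I would then add that growing batch size to the ``suitable schedule'' hypothesis.

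\textbf{Step 4 (summation and upgrade to a limit).} With the error from Steps~2--3 at $O(\eta_t^{1/4})$, sum Step~1 over $t$ and take expectations. Since $\sum_t\eta_t^{5/4}<\infty$ for $a>4/5$, and $f$ is bounded below, this gives
\begin{equation*}
\sum_t\eta_t\,\mathbb{E}\|\nabla f(\theta_{t-1})\|_1<\infty .
\end{equation*}
Together with $\sum_t\eta_t=\infty$, this yields $\liminf_t\mathbb{E}\|\nabla f\|=0$. To turn the $\liminf$ into $\lim$, I use the standard Bertsekas--Tsitsiklis argument. By smoothness and the bounded step,
\begin{equation*}
\bigl|\|\nabla f(\theta_t)\|_2^2-\|\nabla f(\theta_{t-1})\|_2^2\bigr|\le 2L\sqrt{d}\,G\sqrt d\,\eta_t .
\end{equation*}
If $\mathbb{E}\|\nabla f\|^2$ upcrossed a level $\varepsilon$ infinitely often, each upcrossing would consume a fixed amount of $\sum\eta_t$ while the gradient stays above $\varepsilon/2$. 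That would contradict the summability above, so $\lim_{t\to\infty}\mathbb{E}\|\nabla f(\theta_t)\|^2=0$ as claimed in Theorem~\ref{theorem:sage_convergence}.
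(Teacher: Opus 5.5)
Your route is genuinely different from the paper's, and it is considerably more substantive. The paper's sketch uses only the upper bound $\|\hat{\mathbf{U}}_t\|_\infty\le 1$. It then argues that a step never larger than Lion's is ``strictly safer'' and therefore inherits Lion's convergence. An upper bound on the step only controls the second-order term of the descent lemma. Indeed, $\mathbf{H}_t\equiv 0$ satisfies every inequality in the sketch and never moves. Moreover, the appeal to Lion supplies no result for stochastic gradients with fixed $\beta_1,\beta_2$ and $w>0$.

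Your plan supplies exactly what is missing:
\begin{itemize}
\item a lower bound $h_{\min}$ on $\mathbf{H}_t$ in the first-order term;
\item a pointwise sign inequality, which sidesteps the fact that $\mathbf{H}_t$ and $\mathbf{C}_t$ depend on the same $g_t$;
\item a tracking error driven to zero by sending the momentum parameters to $1$;
\item the standard upgrade from $\liminf$ to $\lim$.
\end{itemize}
The price is that you need hypotheses beyond ``a suitable learning rate decay schedule'': $w=0$, schedules on $\beta_1,\beta_2$, $f$ bounded below, and non-degeneracy. These are largely forced. With fixed $\beta_1$ and a fixed batch, the sign of the noisy $\mathbf{c}_t$ can be biased (for skewed noise its expectation follows a median, not $\nabla f$), and a decaying $\eta_t$ does not remove this.

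Three corrections are needed.

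\textbf{First: use the $\beta_1$ schedule in Step~3.} With $1-\beta_{1,t}=\eta_t^{1/2}$, Step~2 gives directly $\mathbb{E}\|\mathbf{c}_t-\nabla\|_1\le\mathbb{E}\|\mathbf{M}_{t-1}-\nabla\|_1+\eta_t^{1/2}\sqrt d\,\sigma=O(\eta_t^{1/4})$. So the ``main obstacle'' costs nothing. The growing-batch fallback, by contrast, is incompatible with your non-degeneracy hypothesis. Once the noise vanishes, the conclusion you are proving forces $\mathbb{E}\|g_t\|^2\to 0$. Hence $\gamma_{\text{rms}}$ and $\sigma_{\text{rms}}$ tend to zero in mean square, which contradicts an almost-sure lower bound $\sigma_0>0$. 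The hypotheses would be jointly unsatisfiable. This is an artifact of $\epsilon>0$: once gradients fall below $\epsilon$, $\mathbf{H}_t\approx\sigma_{\text{rms}}/\epsilon\to 0$.

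\textbf{Second: you can weaken the non-degeneracy hypothesis considerably.} An RMS is at least any single entry divided by $\sqrt d$. Hence $(\mathbf{D}^{\text{ema}}_t)_j\ge\sigma_{\text{rms}}/(\sqrt d\,\sigma_{\text{rms}}+\epsilon)$, and likewise for $\mathbf{D}^{\text{inst}}_t$. So $h_{\min}\ge 1/(2\sqrt d)$ whenever both RMS values are at least $\epsilon/\sqrt d$, independently of $G$. With $\epsilon=0$ (and $\mathbf{H}_t=1$ where a denominator vanishes) you get $h_{\min}=1/\sqrt d$ with no assumption at all, and then either Step~3 option works.

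\textbf{Third: drop ``equivalently absorb the decay into $f$''.} Decoupled decay bypasses the sign and the damper, so it is not SAGE applied to $f+\tfrac w2\|\theta\|^2$. For $w>0$ the claim is in fact false. Take scalar $f(\theta)=\tfrac12(\theta-a)^2$ with $a>1/w$, deterministic gradients, and $\theta_0=0$. Then $\mathbf{C}_t=-1$ throughout, and $\theta_t=(1-\eta_t w)\theta_{t-1}+\eta_t\mathbf{H}_t\le 1/w$ by induction. So $|f'(\theta_t)|\ge a-1/w$ for all $t$, and $w=0$ is necessary, not a convenience.

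Finally, with a time-varying $\beta_{2,t}$, replace the bias correction by $1-\prod_{s\le t}\beta_{2,s}$. This keeps $\hat{\mathbf S}_t$ a convex combination of the $\mathbf s_\tau$, so $\|\hat{\mathbf S}_t\|_\infty\le G$ survives.
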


\begin{proof}[Proof Sketch]
The \textbf{SAGE} update is $\hat{\mathbf{U}}_t = \mathbf{C}_t \odot \mathbf{H}_t$.

\begin{enumerate}
  \item The update direction $\mathbf{C}_t = \text{sign}(\dots)$ is bounded, $||\mathbf{C}_t||_\infty \le 1$.
  \item From Lemma \ref{lemma:bounded_h_t}, the adaptive scale $\mathbf{H}_t$ is also bounded, $||\mathbf{H}_t||_\infty \le 1$.
  \item Therefore, the full update step $\hat{\mathbf{U}}_t$ is element-wise bounded, $||\hat{\mathbf{U}}_t||_\infty = ||\mathbf{C}_t \odot \mathbf{H}_t||_\infty \le 1$.
\end{enumerate}

\textbf{SAGE} applies an update that is provably no larger in magnitude than that of Lion. As \textbf{SAGE} is a strictly safer (i.e., more cautious) sign-based method, its convergence is guaranteed by the established frameworks for $\text{sign}$-based optimizers \cite{lion}.
\end{proof}

\subsection{Relationship to Lion}
\label{ssec:relationship_lion}
\textbf{SAGE} is a direct, adaptive generalization of the Lion\cite{lion} optimizer. The core Lion update can be expressed as:
\begin{equation}
    \hat{\mathbf{U}}_t^{\text{Lion}} = \mathbf{C}_t \odot \mathbf{1}
\end{equation}
where $\mathbf{C}_t = \text{sign}(\beta_1 \mathbf{M}_{t-1} + (1 - \beta_1) g_t)$ and the update magnitude is a static scale of $\mathbf{1}$.  The \textbf{SAGE} update simply replaces this static scale with our dynamic, adaptive scale $\mathbf{H}_t$: \begin{equation} \hat{\mathbf{U}}_t^{\text{SAGE}} = \mathbf{C}_t \odot \mathbf{H}_t \end{equation} Therefore, Lion is a special case of \textbf{SAGE} where $\mathbf{H}_t$ is fixed to $\mathbf{1}$ for all dimensions and all steps $t$.

As our theoretical analysis in Lemma \ref{lemma:bounded_h_t} proves, $||\mathbf{H}_t||_\infty \le 1.0$. This means \textbf{SAGE} is a strictly safer generalization, as its per-dimension update magnitude is provably less than or equal to Lion's. Our experiments confirm that this bounded, adaptive damping is the key to providing the stability needed to outperform Lion by using a higher learning rate.

\begin{figure*}[ht]
\centering
\includegraphics[width=\textwidth]{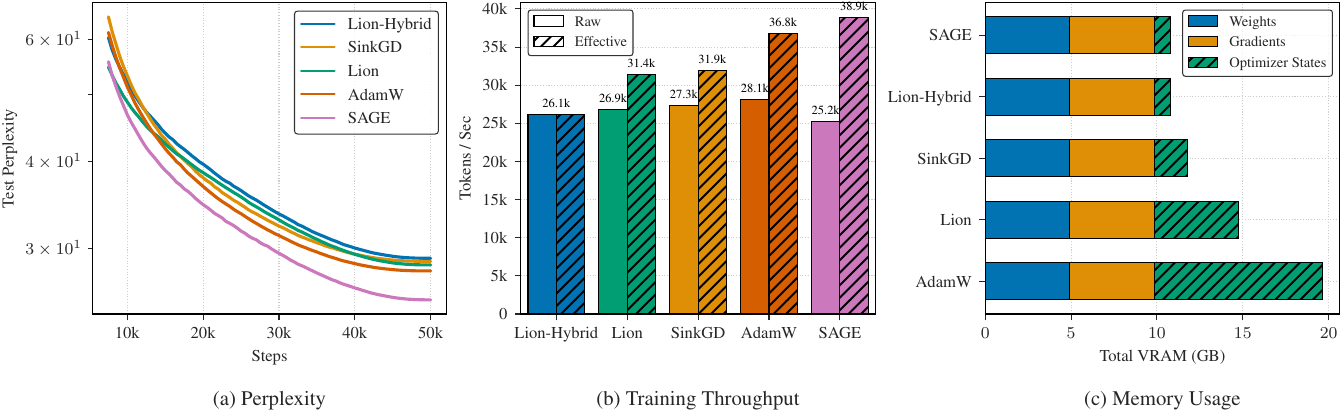} 
\caption{\textbf{Convergence and efficiency on the 1.3B model.} (a) Test perplexity - lower the better; SAGE achieves the lowest final perplexity and faster convergence. (b) Throughput - higher the better; \textit{Effective} throughput normalizes processing speed by the steps required to match the lowest (Lion-Hybrid) performance. (c) Peak memory usage breakdown by component.}
\label{fig:convergence_and_throughput}
\end{figure*}

\begin{table*}[ht]
\centering
\begin{tabular}{ll cc cc cc}
\toprule
\multirow{2}{*}{\textbf{Group}} & \multirow{2}{*}{\textbf{Method}} & \multicolumn{2}{c}{\textbf{270M}} & \multicolumn{2}{c}{\textbf{0.6B}} & \multicolumn{2}{c}{\textbf{1.3B}} \\
\cmidrule(lr){3-4} \cmidrule(lr){5-6} \cmidrule(lr){7-8}
& & \textbf{PPL} & \textbf{Mem (GB)} & \textbf{PPL} & \textbf{Mem (GB)} & \textbf{PPL} & \textbf{Mem (GB)} \\
\midrule
\multirow{4}{*}{\textit{Baselines}} & AdamW & 37.35 & 2.1 & 31.74 & 4.4 & 27.81 & 9.8 \\
& APOLLO & 52.83 & 1.2 & 44.84 & 2.1 & 39.54 & 3.6 \\
& Lion & 30.24 & 1.0 & \textbf{26.58} & 2.2 & 28.37 & 4.9 \\
& SinkGD-Hybrid & 34.30 & 0.9 & 32.85 & 1.5 & 28.71 & 1.9 \\
\midrule
\multirow{3}{*}{\textit{Ablations}} & SinkGD-Pure & 192.7 & 0.0 & 117.8 & 0.0 & 98.84 & 0.0 \\
& SAGE-Pure & 116.0 & 1.0 & 179.0 & 2.2 & 216.0 & 4.9 \\
& Lion-Hybrid & 32.10 & \textbf{0.5} & 28.73 & \textbf{0.7} & 28.40 & \textbf{0.9} \\
\midrule
\multirow{1}{*}{\textit{Our Methods}}
& \textbf{SAGE-Hybrid} & \textbf{29.95} & \textbf{0.5} & 26.71 & \textbf{0.7} & \textbf{24.33} & \textbf{0.9} \\
\bottomrule
\end{tabular}
\caption{Main results comparing final test perplexity (PPL) and peak optimizer memory usage (Mem) across three model scales. SAGE consistently achieves the best perplexity while maintaining a low memory footprint, outperforming both pure and hybrid baselines. Lower PPL is better.}
\label{tab:main_results}
\end{table*}

\section{Experiments}
\label{sec:experiments}

We conduct a series of experiments to validate SAGE and answer three key questions:
1) Does our hybrid SAGE optimizer outperform state-of-the-art baselines in final perplexity and convergence speed?
2) Is SAGE's adaptive "safe-damping" mechanism the key to its stability and performance?
3) Does SAGE deliver on its promise of significantly reducing memory consumption compared to AdamW used in SinkGD?

\subsection{Experimental Setup}
\label{ssec:setup}

\paragraph{Training Details}
All models were implemented in PyTorch \cite{pytorch19} using the Transformers library \cite{huggingface19} and trained from scratch on a single NVIDIA H200 GPU. We used a global batch size of 130k tokens (sequence length 512) for all runs. All runs used a cosine learning rate schedule with a 10\% warmup. For baseline optimizers, we primarily adopted the learning rates \textit{favored} by their original works. We performed a sanity check on these baselines using larger learning rates, confirming that the hyperparameters suggested in their respective papers were optimal for stable convergence. Crucially, for our strongest direct competitor, Lion-Hybrid, we applied the same learning rates used for SAGE. For our proposed SAGE and other hybrid configurations not covered in prior work, we conducted a grid search to find the reasonable learning rate. This approach ensures a fair comparison where each optimizer operates in a reasonably tuned regime, while acknowledging that further fine-grained tuning could potentially yield marginal improvements for all methods. Other hyperparameters were kept constant ($\beta_1=0.9, \beta_2=0.99, \text{wd}=0.01$ for all optimizers).

\paragraph{Models and Data}
We pretrained a series of LLaMA-architecture \cite{llama} models from scratch, with parameter counts of 270M, 0.6B, and 1.3B. We trained all models on a 6.6B token subset of The Pile \cite{pile}, using \textbf{bfloat16} precision for efficiency. We chose The Pile over simpler datasets like C4 \cite{c4dataset} as recent work suggests its greater diversity provides a more rigorous benchmark for evaluating model capabilities \cite{bandari24}. As for the subset of The Pile, we used the data selection framework based on importance resampling introduced by \citet{dsir23}. We used an 8:2 train/test split ratio.

\paragraph{Optimizers and Baselines}
We compare the following list of optimizer configurations. Our primary method is the \textbf{SAGE}, which uses SAGE for the embedding layer and follows SinkGD for all other weights. For brevity, we hereinafter refer to the hybrid configurations simply as SAGE and SinkGD, unless the distinction with the pure ablation is necessary. We compare against:
\begin{itemize}
    \item \textbf{AdamW}: The standard 2-state (M, V) baseline from \citet{loshcilov19}.
    \item \textbf{Lion}: The sign-based baseline from \citet{lion} with 1-state (M) applied to all parameters.
    \item \textbf{APOLLO}: The baseline from \citet{apollo}, which uses an auxiliary low-rank optimizer state based on pure random projection.
    \item \textbf{SinkGD-Pure}: An ablation applying SinkGD from \citet{scetbon25} to all weights, including the embeddings.
    \item \textbf{SinkGD}: The baseline from \citet{scetbon25}, where SinkGD is applied to all 2D weights except for the embeddings and for all other weights AdamW is used. 
    \item \textbf{Lion-Hybrid}: A strong baseline with the same setup as SinkGD, but using Lion for embeddings.
    \item \textbf{SAGE-Pure}: An ablation where SAGE is applied to all parameters, including embeddings and 1D weights, to validate our hybrid design.
    \item \textbf{SAGE-Hybrid}: Our proposed method, using SinkGD for all 2D dense weights except for the embeddings and SAGE is applied to all other parameters. We also evaluated a variant using AdamW for 1D parameters as did \citet{scetbon25}, but found no practical difference in performance, so we report results for the simpler hybrid configuration.
\end{itemize}
To validate our design, we conducted ablation studies with \textbf{pure} and other \textbf{hybrid} configurations, confirming that the SAGE structure is the optimal combination. All 1D parameters (biases, normalization) of hybrid optimizers use AdamW as did in \citet{scetbon25}.
All experiments were run with 3 different random seeds, and we report the average of the final test perplexity.
Further details on data preprocessing and hyperparameter tuning are provided in Appendix \ref{sec:appendix}.

\paragraph{Unit-Norm SinkGD for Dense Layers.}
We employed a modified \textbf{Unit-Norm} version of Sinkhorn Gradient Descent (SinkGD) for dense 2D layers. Let $W \in \mathbb{R}^{m \times n}$ denote a weight matrix where we assume $m \le n$ without loss of generality. The original SinkGD \cite{scetbon25} explicitly scales the normalized gradient matrix by $\sqrt{nm}$ to match the expected Frobenius norm of Adam updates ($\approx \sqrt{nm}$).

However, our SAGE optimizer employs a damping mechanism that reduces the effective update magnitude for stability. Pairing SAGE with the original variance-preserving SinkGD creates a massive magnitude mismatch: dense layers receive updates scaled by $\sqrt{nm}$ while the embedding layer receives damped updates $\le 1.0$.

To resolve this, we removed the $\sqrt{nm}$ scaling factor from SinkGD, targeting a unit row norm ($1.0$). This aligns the update magnitude of the dense layers with the "safe-damped" regime of SAGE. We found that this magnitude alignment is critical for stability at scale, allowing us to control the effective learning speed via a single scalar hyperparameter $\alpha$ (typically $\alpha \approx 10$) rather than implicit dimension-dependent scaling.

\subsection{Main Results}
\label{ssec:main_results}

\paragraph{Performance at Scale} As shown in Table \ref{tab:main_results}, our SAGE optimizer achieves the lowest test perplexity across all 3 model scales. We observe that while the Lion-Hybrid is a strong baseline, our SAGE consistently outperforms it. This confirms our hypothesis that Lion's static $1.0$ scale is suboptimal, and our adaptive, "safe-damping" scale provides a significant performance gain.  Furthermore, Figure \ref{fig:convergence_and_throughput} visualizes these results, showing that the performance gap between SAGE and its baselines widens as the model size increases. This suggests that SAGE's superior stability and adaptivity are even more critical for larger, more complex models.
The performance advantage of SAGE becomes increasingly pronounced at larger scales. For the 1.3B model, it achieves a perplexity of 24.33, surpassing the next-best baseline (AdamW at 27.81) by a substantial margin. This widening gap is particularly noteworthy because as models grow, the embedding layer's relative parameter share decreases (from 48\% to 20\% in our experiments). We hypothesize this trend arises because larger models are deeper and have higher-dimensional embeddings, making the gradient dynamics more complex. In this regime, SAGE's fine-grained, adaptive control over embedding updates becomes even more critical for overall model performance.

\begin{figure*}[ht]
\centering
\includegraphics[width=\textwidth]{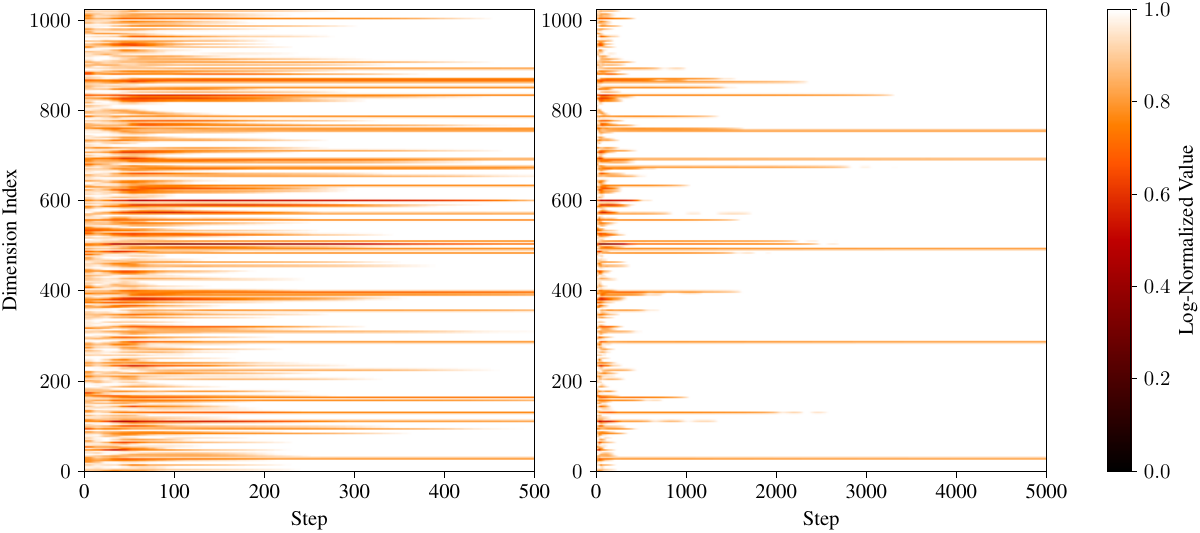}
\caption{Visualization of the SAGE adaptive scale $\mathbf{H}_t$ (for $d=1024$ dimensions) over the first 500 (left) and 5k (right) training steps. The raw $\mathbf{H}_t \in (0, 1.0]$ values are log-transformed and normalized. The \textbf{white background} (value 1.0) corresponds to the majority of "quiet" dimensions where the gradient variance is below the layer average, causing SAGE to clip the scale to 1.0 (Lion's default). The \textbf{colored stripes} (low values $\ll 1.0$) correspond to specific "loud" high-variance dimensions where SAGE actively applies strong damping. This confirms SAGE acts as a sparse damper, intervening only on specific unstable dimensions while letting the majority update at full scale.}
\label{fig:ht_heatmap}
\end{figure*}

\subsection{Memory Footprint}
\label{ssec:memory}
As shown in Table \ref{tab:main_results}, our SAGE optimizer offers significant memory savings compared to the SinkGD baseline, which uses the memory-intensive AdamW for its embedding layer. By replacing AdamW's $O(Vd)$ second-moment state with SAGE's negligible $O(d)$ state, our hybrid matches the low memory footprint of Lion-Hybrid. This reduction in optimizer state translates to a significant 10-45\% decrease in total peak memory, at the model size of 1.3B, enabling researchers to train larger models or use larger batch sizes on the same hardware. A detailed breakdown across all model scales can be found in Appendix \ref{sec:appendix_memory}.

\subsection{Analysis and Ablation Studies}
\label{ssec:analysis}

\paragraph{Learning Rate Tuning and Stability}
We investigated the optimal learning rate for SAGE through a targeted grid search, motivated by its relationship to Lion. \citet{lion} recommends a learning rate of $1 \times 10^{-4}$ for language modeling. This relationship (discussed in \ref{ssec:relationship_lion}) allows us to directly adapt the learning rates from Lion via a \textit{lazy-tuning} approach (tuned without extensive search), for SAGE. This can be thought of as a variation of LR grafting \citep{grafting22}. We therefore conducted a grid search over the range $[1 \times 10^{-4}, 2 \times 10^{-4}, 5 \times 10^{-4}, 1 \times 10^{-3}, 2 \times 10^{-3}]$. While Lion-Hybrid's performance peaks at $1 \times 10^{-4}$ and collapses shortly after, SAGE remains stable and achieves its best performance at a much higher learning rate of $1 \times 10^{-3}$. This demonstrates that SAGE's "safe-damping" mechanism successfully tames gradient variance, unlocking a more potent learning rate. Additionally, SAGE's adaptive nature also renders it robust to batch size variations, unlike stateless methods which behave like SGD and often require strict adherence to linear scaling rules when gradient noise levels change \citep{goyal17, trick14}.

\paragraph{Internal State Visualization}
Finally, in Figure \ref{fig:ht_heatmap}, we provide direct evidence of SAGE's mechanism. We visualize the normalized adaptive scale $\mathbf{H}_t$ over training time. The visualization reveals a distinct "sparse damping" policy. The vast majority of the heatmap is white (value $1.0$), identifying the low-variance dimensions where SAGE clips the scale to $1.0$, defaulting to Lion's behavior. Conversely, distinct colored stripes appear horizontally, \textbf{representing specific high-variance dimensions where SAGE actively applies strong damping} ($\mathbf{H}_t \ll 1.0$).
Crucially, the heatmap captures the temporal evolution of this mechanism. In the early training steps, Figure \ref{fig:ht_heatmap}(left), we observe dense, widespread damping activity as the optimizer calibrates to the initial gradient noise. Over the long term, Figure \ref{fig:ht_heatmap}(right), the system stabilizes: the damping becomes more sparse and localized to a consistent subset of structurally high-variance dimensions, allowing the majority of the model to update at full magnitude. This proves SAGE is not just reacting to random spikes, but applying a consistent, fine-grained policy that evolves from global stabilization to surgical intervention.

This low-dimensional structure is consistent with findings that neural network optimization occurs in a low-dimensional subspace \citep{gur2018gradient}, and that adaptation signals often have a low-rank structure \citep{aghajanyan2020intrinsic}. Our analysis shows that although $\mathbf{H}_t$ lives in a $d$-dimensional space, its evolution is governed by a few smooth, coherent principal directions. This indicates that SAGE maintains a compact representation of the update geometry, enabling stable optimization without relying on traditional per-parameter moment estimates. Further PCA analysis of $\mathbf{H}_t$ can be found in \ref{sec:appendix_pca}.

\section{Conclusion}
\label{sec:conclusion}
In this work, we identified the \textit{embedding layer dilemma} as an unaddressed bottleneck for light-state optimizers. We proposed SAGE, a novel optimizer that generalizes Lion with a memory-efficient $O(d)$ adaptive scale. By designing this scale as a "safe damper", SAGE effectively tames the high-variance embedding gradients that can limit the performance of the optimizer.
Our experiments on models up to 1.3B parameters show that SAGE achieves state-of-the-art perplexity and convergence speed. We demonstrated SAGE's superior stability allows for more aggressive learning rates than Lion, achieving top-tier performance while matching the low memory footprint of Lion-based hybrids. By resolving the \textit{embedding layer dilemma}, SAGE delivers optimal performance without the high memory overhead of traditional methods, offering a more powerful and accessible path for research involving LLM training.

\section*{Limitations}
While SAGE demonstrates strong performance, our study has several limitations. First, our experiments were conducted on models up to 1.3B parameters. Although we observe a clear trend of a widening performance gap as model size increases, these models are still relatively small in the modern LLM landscape, and further validation on larger scales (e.g., 7B+) is necessary. 

Second, the duration of our pre-training experiments was constrained by computational budgets. Since the \textit{Effective Throughput} metric is a function of the final perplexity reached by the baseline, the reported efficiency ratios are tied to this specific training horizon. In practice, performance gaps between optimization algorithms often widen as training extends on larger datasets; thus, our reported efficiency gains may represent conservative estimates compared to full-scale, multi-trillion token pre-training.

Additionally, we acknowledge the relevance of recent orthogonal optimization approaches such as Muon \cite{muon}. We selected SinkGD as our primary baseline because it represents the state-of-the-art within the strictly stateless and variance-preserving paradigm, aligning most closely with the theoretical foundations of our method. Extending our evaluation to include orthogonal families like Muon would require exhaustive hyperparameter tuning that exceeded our computational budget; however, we recognize this as a valuable direction for future large-scale benchmarking.

Finally, our analysis was confined to language modeling on The Pile dataset. The effectiveness of SAGE on other modalities (e.g., vision) or fine-tuning tasks remains an open question.

\bibliography{custom}

\clearpage
\onecolumn
\appendix
\section{Appendix}
\label{sec:appendix}
\subsection{Model Configurations}
Table \ref{tab:model_configs} provides the full configuration of each model used in the experiments, for readers to reproduce the results.

\begin{table}[h]
\centering
\begin{tabular}{lccc}
\toprule
\textbf{Configuration} & \textbf{270M} & \textbf{0.6B} & \textbf{1.3B} \\
\midrule
Hidden Size & 1024 & 1536 & 2048 \\
Intermediate Size & 2816 & 4224 & 5632 \\
Max Position Embeddings & 8192 & 8192 & 8192 \\
Num Attention Heads & 16 & 24 & 32 \\
Num Hidden Layers & 13 & 16 & 24 \\
Num Key-Value Heads & 2 & 3 & 4 \\
Vocab Size & 128256 & 128256 & 128256 \\
\bottomrule
\end{tabular}
\caption{Model configurations for the different scales used in the experiments.}
\label{tab:model_configs}
\end{table}

\subsection{Full Memory Footprint Comparison}
\label{sec:appendix_memory}
Table \ref{tab:memory_full} provides a comprehensive breakdown of memory consumption across all tested model scales.

\begin{table*}[h]
\centering
\begin{tabular}{l cc cc cc}
\toprule
& \multicolumn{2}{c}{\textbf{270M}} & \multicolumn{2}{c}{\textbf{0.6B}} & \multicolumn{2}{c}{\textbf{1.3B}} \\
& \multicolumn{2}{c}{\small(Emb. Params: 48\%)} & \multicolumn{2}{c}{\small(Emb. Params: 33\%)} & \multicolumn{2}{c}{\small(Emb. Params: 20\%)} \\
\cmidrule(lr){2-3} \cmidrule(lr){4-5} \cmidrule(lr){6-7}
\textbf{Optimizer} & \textbf{Opt. (GB)} & \textbf{Total (GB)} & \textbf{Opt. (GB)} & \textbf{Total (GB)} & \textbf{Opt. (GB)} & \textbf{Total (GB)} \\
\midrule
AdamW & 2.045 & 4.091 & 4.421 & 8.843 & 9.833 & 19.67 \\
APOLLO & 1.208 & 3.254 & 2.102 & 6.524  & 3.648 & 13.48 \\
Lion & 1.023 & 3.067 & 2.211 & 6.631 & 4.916 & 14.75 \\
SinkGD-Hybrid & 0.979 & 3.023 & 1.468 & 5.890 & 1.958 & 11.79 \\
\midrule
\textit{SinkGD-Pure} & \textit{0.0} & \textit{2.046} & \textit{0.0} & \textit{4.422} & \textit{0.0} & \textit{9.832}\\
SAGE-Pure & 1.023 & 3.069 & 2.212 & 6.634 & 4.92 & 14.75 \\
Lion-Hybrid & \textbf{0.489} & \textbf{2.535} & \textbf{0.734} & \textbf{5.156} & \textbf{0.979} & \textbf{10.81} \\
\midrule
\textbf{SAGE-Hybrid} & \textbf{0.489} & \textbf{2.535} & \textbf{0.734} & \textbf{5.156} & \textbf{0.979} & \textbf{10.81} \\
\bottomrule
\end{tabular}
\caption{Peak memory footprint comparison across all model scales. "Opt." refers to optimizer state memory, while "Total" is the sum of model weights, gradients and the optimizer states. SAGE is comparable to the low total memory of a pure stateless optimizer \textit{SinkGD-Pure}, offering a substantial reduction compared to the AdamW-based hybrid, which translates to significant savings in total memory.}
\label{tab:memory_full}
\end{table*}

\begin{figure*}[h]
    \centering
    \includegraphics[width=\textwidth]{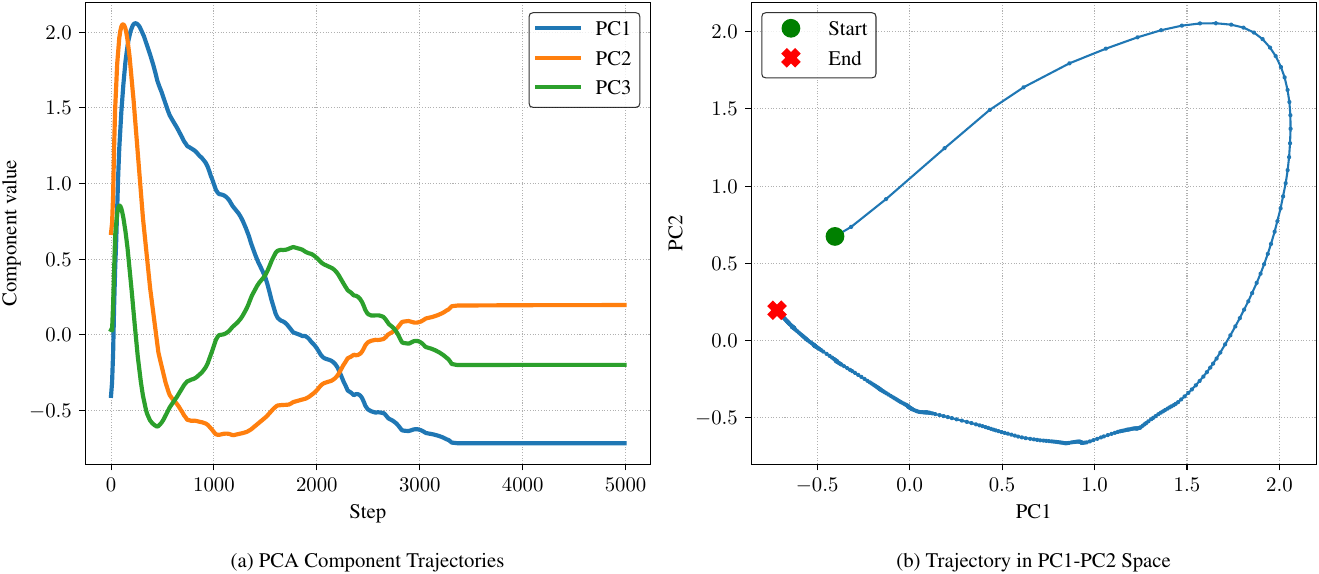}
    \caption{PCA of the adaptive scale vector $\mathbf{H}_t$. (a) Trajectories of the top 3 principal components. PC1 and PC2 show a sharp, distinct spike in activity during the early training phase (initial 1k steps), reflecting rapid adaptation to the embedding layer's geometry. This is followed by a gradual stabilization phase where components settle into steady states. (b) The trajectory in PC1-PC2 space reveals a clear "hook" pattern: a rapid initial excursion away from the origin (adaptation) followed by a smooth return path that converges toward a stable operating point (stabilization).}
    \label{fig:ht_pca_analysis}
\end{figure*}

\subsection{Internal State Dynamics Analysis}
\label{sec:appendix_pca}
\paragraph{Internal State Dynamics (PCA)}
To further understand how SAGE regulates update magnitudes, we performed Principal Component Analysis (PCA) on the sequence of the adaptive scale vectors, $\mathbf{H}_t$. The trajectories of the top principal components (PCs) reveal a distinct two-phase adaptation process.

As shown in Figure \ref{fig:ht_pca_analysis} (a), both PC1 and PC2 exhibit a sharp, significant rise in magnitude during the initial training steps ($t < 50$), followed by a gradual decay and stabilization. This indicates that SAGE performs \textbf{rapid initial calibration}, aggressively adjusting the damping factors as it first encounters the embedding layer's high-variance gradients. Following this discovery phase, the components settle into stable, non-zero values, confirming that the optimizer finds and maintains a consistent damping policy for specific dimensions rather than oscillating erratically.

The trajectory in the PC1-PC2 plane (Figure \ref{fig:ht_pca_analysis} (b)) visualizes this dynamic as a clear "excursion and return" path. The system quickly moves to a high-activity region to handle initial training instability and then smoothly converges toward a fixed point. This confirms that SAGE's adaptive mechanism is both responsive to early-stage noise and stable during long-term training.

\subsection{Learning Rate Tuning}
\label{sec:appendix_tuning}
We performed a comprehensive comparison runs of learning rates for SAGE on the 0.6B model, which resulted in $2\times10^{-3}$ as the best out of the range.

Furthermore, we observed a distinct advantage in stability regarding batch size variations. Stateless optimizers like SinkGD, lacking a history-based moment to normalize variance, behave similarly to standard SGD. Consequently, they are highly sensitive to the noise introduced by smaller batch sizes. To maintain convergence stability when the batch size is reduced, they typically require a proportional reduction in learning rate, following the Linear Scaling Rule \citep{goyal17, trick14}. In contrast, SAGE's adaptive damper $\mathbf{H}_t$ naturally adjusts to the changing gradient variance associated with different batch sizes. This behavior aligns with the properties of adaptive optimizers, which often follow a gentler square-root scaling or remain robust across a wider range of settings \citep{threemechanisms19}. Empirically, we found that SAGE could maintain stability using the same learning rate even when the effective batch size was reduced to fewer than 10, whereas SinkGD required significant re-tuning.

\subsection{Instantaneous Stability Constraint}
\label{sec:appendix_algo_details}
While the EMA-based state $\mathbf{S}_t$ effectively handles long-term variance, it introduces a lag that can be vulnerable to sudden, extreme gradient spikes. To address this, we implement an \textit{Instant Damper} based on the current batch's gradient $\mathbf{g}_t$.

We calculate the instantaneous relative scale $\mathbf{H}^{\text{inst}}_t$ using the same \textit{Relative RMS} logic but applied to $|\mathbf{g}_t|$ instead of $\mathbf{S}_t$:
\begin{equation}
    (\mathbf{H}^{\text{inst}}_t)_j = \frac{\sqrt{\text{mean}(|\mathbf{g}_t|^2)}}{|(\mathbf{g}_t)_j| + \epsilon}
\end{equation}
The final adaptive scale applied to the update is the conservative minimum of the EMA-based scale and this instantaneous scale:
\begin{equation}
    \mathbf{H}_t \leftarrow \min(\mathbf{H}^{\text{EMA}}_t, \mathbf{H}^{\text{inst}}_t, 1.0)
\end{equation}
This acts as a fast-acting \textit{circuit breaker} that clips updates immediately if the current gradient direction is an outlier relative to the layer's current activation pattern.

\subsection{Throughput Analysis and Metrics}
To comprehensively evaluate training efficiency, we report performance on a single H200 GPU using two distinct metrics in Figure\ref{fig:convergence_and_throughput}(b):

\textbf{Raw Throughput.} This measures the standard computational speed of the optimizer, expressed in tokens processed per second (tokens/sec). It reflects the computational overhead of the optimizer's update step (e.g., SAGE vs. AdamW) independent of convergence quality.

\textbf{Effective Throughput.} Raw throughput does not account for algorithmic efficiency (i.e., how much the model learns per step). Following the methodology of \citet{scetbon25}, we compute \textit{Effective Throughput} to quantify wall-clock convergence speed.

We define the effective throughput $\mathcal{T}_{\text{eff}}$ of an optimizer $\mathcal{O}$ as:
\begin{equation}
\mathcal{T}_{\text{eff}}(\mathcal{O}) = \frac{N_{\text{base}}}{T_{\mathcal{O}}}
\end{equation}
where $N_{\text{base}}$ is the total number of training tokens required by the baseline optimizer Lion-Hybrid, which exhibited the slowest convergence in our 1.3B scale experiments, to reach its final test perplexity, and $T_{\mathcal{O}}$ is the wall-clock time in seconds required by optimizer $\mathcal{O}$ to reach that same perplexity. This metric effectively penalizes optimizers that process tokens quickly but converge slowly, while rewarding optimizers like SAGE that maximize learning efficiency per second.

\vspace{0.5em}
\noindent\textbf{Sensitivity to Experimental Setup.}
It is important to note that the absolute magnitudes of these throughput ratios may vary across different experimental environments. Raw throughput is sensitive to implementation details (e.g., kernel optimizations), while effective throughput depends heavily on the dataset size and total training budget. Specifically, the target perplexity defined by the baseline changes as training extends; typically, superior optimizers widen the performance gap on larger datasets or longer training horizons. Therefore, while computational constraints limited the scale of this study, we hypothesize that the reported efficiency gains are conservative estimates. We expect the relative performance ordering—with SAGE outperforming the baselines—to remain consistent or become more pronounced in larger-scale training regimes.

\section{Use of AI Assistants}
We acknowledge the use of AI coding assistants to refine the Python scripts used for generating the plots in this paper. We emphasize that all data points presented in the figures were loaded directly from raw experimental logs, and the AI tools were strictly limited to formatting and visualization tasks.

\end{document}